\documentclass[twocolumn]{article}
\usepackage[width=17cm,height=22cm]{geometry}
\usepackage[english]{babel}
\usepackage[utf8]{inputenc}
\usepackage{fancyvrb}
\usepackage{authblk}
\usepackage{hyperref}
\usepackage{amsmath,amsthm,amssymb} 
\usepackage{bbm}
\usepackage{natbib}
\usepackage{hyperref}
\usepackage{multirow}
\usepackage{graphicx}
\usepackage{float}
\usepackage{color,soul}
\DeclareMathAlphabet{\mathpzc}{OT1}{pzc}{m}{it}

\newtheorem{proposition}{Proposition}[section]

\makeatletter
\DeclareFontEncoding{LS1}{}{}
\DeclareFontSubstitution{LS1}{stix}{m}{n}
\DeclareMathAlphabet{\mathscr}{LS1}{stixscr}{m}{n}
\makeatother

\bibliographystyle{apalike}

\title{Similarities between policy gradient methods (PGM) in \\ reinforcement learning (RL) and supervised learning (SL)}
\author[1]{Eric Benhamou\thanks{eric.benhamou@dauphine.eu}}
\affil[1]{A.I Square Connect, Lamsade PSL}

\begin{document}
\maketitle

\begin{abstract}
Reinforcement learning (RL) is about sequential decision making and is traditionally opposed to supervised learning (SL) and unsupervised learning (USL). In RL, given the current state, the agent makes a decision that may influence the next state as opposed to SL (and USL) where, the next state remains the same, regardless of the decisions taken, either in batch or online learning.  Although this difference is fundamental between SL and RL, there are connections that have been overlooked. In particular, we prove in this paper that gradient policy method can be cast as a supervised learning problem where true label are replaced with discounted rewards. We provide a new proof of policy gradient methods (PGM) that emphasizes the tight link with the cross entropy and supervised learning. 
We provide a simple experiment where we interchange label and pseudo rewards. We conclude that other relationships with SL could be made if we modify the reward functions wisely.
\end{abstract}

\medskip

\noindent\textbf{keywords}: Policy gradient, Supervised learning, \\ Cross entropy, Kullback Leibler divergence, entropy.

\section{Introduction}
In RL, PGM are frequently used \cite{Williams_1992,Sutton_1999,Silver_2014,Lillicrap_2015}. They are traditionally opposed to value learning methods \cite{Sutton_1998,Watkins_1992}. 
PGM principle is very simple. Improve gradually the policy through gradient descent. PGM have two important concepts. It is a policy method as its name emphasizes. It is also a gradient descent method. Policy means we observe and act. Gradient descent methods uses the fact that the best move locally is along the gradient. As this moves is at first order, a learning rate needs to ensure policy improvement is not too large at each step. PGM have been popularized in REINFORCE \cite{Williams_1992} and in \cite{Sutton_1999} and have received wider attention with Actor Critic methods \cite{Konda_2003,Peters_2008} in particular when using deep PGM \cite{Mnih_2016} that combines policy and value methods. In addition, recently, it has been found that an entropy regularization term may fasten convergence \cite{Donoghue_2016,Nachum_2017,Schulman_2017}.

When looking in details in REINFORCE \cite{Williams_1992}, we can remark that the gradient term with respect to the policy can indeed be interpreted as the log term in the cross entropy in supervised learning. If in addition, we make the bridge between RL and SL, emphasizing that RL problem can be reformulated as a SL problem where true labels are changed by expected discounted future rewards, and estimated probabilities by policy probabilities, the link between RL and SL becomes obvious. In addition, leveraging the tight relationship between cross entropy and Kullback Leibler divergence, we can interpret the entropy regularization terms very naturally. This is precisely the objective of this short paper. Call attention to the tight connection between RL and SL to give theoretical justification of some of the techniques used in PGMs. 

The paper is organized as follows. In section \ref{sec:SL}, we recall the various choices of functional losses in SL and exhibit that cross entropy is one of the main possibilities for loss functions. We flaunt the relationship between cross entropy and Kullback Leibler divergence, harping on the additional entropy term. In section \ref{sec:RL}, we present PGMs. We rub in the interpretation of RL problems as a modified cross entropy SL problem. We conclude in section \ref{sec:experiments} with a financial numerical experience using deep PGM stressing that in the specific case of action that do not influence the environment, the difference between RL and SL may be very tenuous.

\section{Related Work}
Looking at similarities and synergies between RL and SL together was for a long time overlooked. This can be easily explained as the two research communities were different and thought they were labouring on incompatible or at least very different approaches. However, there has been one type of learning that promotes the similitude between RL and SL. This has been Imitation Learning.

Imitation learning \citep{Schaal_1996} is a classic technique for learning from human demonstration. Imitation learning uses a supervised approach to imitate an expert's behaviors,  hence doing a RL task to accomplish a SL one. DAGGER~\citep{Ross_2011} is considered to be the mainstream imitation algorithm. It requests an action from the expert at each step. It uses an action sampled from a mixed distribution from the agent and the expert. It combines the observed states and demonstrated actions to train the agent successively. This has led to numerous extensions of this algorithm and in particular to deep version of it. Deeply AggreVaTeD~\citep{Sun_2017} extends DAGGER with deep neural networks and continuous action spaces.
Other approaches have been to stop opposing RL and SL and rather leverage SL in deep RL. \cite{Hester_2017} for instance have used SL tasks to train the agent better and created the method of Deep Q-learning from Demonstrations (DQfD). It tackles the problem of the necessity of huge amount of data for deep reinforcement learning (Deep RL) and the poor performance of Deep RL algorithms during initial phase of learning. The method of Deep Q-learning from Demonstrations (DQfD) solves the issue by combining RL techniques (temporal difference updates) with SL techniques (supervised classification of the demonstrator's actions) as the target network in Deep Q learning is initially trained with supervised learning. 

All of these works show that RL and SL are not as opposed as one may have thought. We argue here that as nice as these works are, they do not emphasizes that, ignoring for a while the issue of feedback effect of action on next state environment, a PGM can be reformulated as a SL task where true labels are changed into future expected reward while the PG can be interpreted as a cross entropy loss minimization.

\section{Supervised Learning} \label{sec:SL}
The goal of SL classification is to infer a function from labelled training data that maps inputs into labelled outputs. In psychology, this is sometimes analyzed as concept learning. 
The deduction of the function parameters is done traditionally through the optimization of a loss function. SL classifiers parameters are the ones of the optimal solution of the optimization program. To keep things simple, let us assume that we are looking at a binary classification problem. Let us assume we observe $D_n = \{(X_1,Y_1), \ldots, (X_n,Y_n)\}$ that are $n$ independent random copies of $(X,Y) \in \mathcal{X} \times \mathcal{Y}$. The feature $X$ lives in some abstract space $\mathcal{X} $ ($\mathbf{R}^d$ for instance) 
and $Y$ is called label. Binary classification assumes that $Y$ take two different values: $\mathcal{Y} = \{-1, 1\}$, 
while multi-class classification assumes $\mathcal{Y} = \{1, \ldots, K \}$. To keep things simple, we will only look at binary classification. Naturally, one would like to find a function 
$f: \mathcal{X} \mapsto \mathbf{R}$ that best maps $X$ to $Y$. 
We are also given a loss function $\mathscr{l}: \{-1,1\} \times  \{-1,1\}  \mapsto \mathbf{R}$ 
that measures the error of a specific prediction. The loss function value at an arbitrary point
$(Y, \hat{Y})$ reads as the cost incurred when predicting $\hat Y$ while true label is $Y$. 
 In classification the loss function is often a zero-one loss, that is, $\mathscr{l}(Y,\hat Y)$ 
is zero when the predicted label matches the true label $Y=\hat Y$ and one otherwise. 
To find our best classifier, we look for the classifier with the smallest expected loss. 
In other words, we look up for the function $f$ that  minimizes the expected $\mathscr{l}$-risk,
 given by $\mathcal{R}_{\mathscr{l}}(h)= \mathbf{E}_{X \times Y}[\mathscr{l}(Y,f(X))]$.

Another naive approach is to minimize the empirical classification error $\mathbb{E}[ \mathbbm{1}_{\{-Y f(X) \geq 0\}}]$. 
To bypass the non convexity of $\mathbbm{1}_{\mathbb{R}_{+}}$, we use convex risk minimization (CRM) \cite{Boucheron_2003}. CRM defines a convex surrogate for the classification problem, called the cost function $\varphi:  \mathbb{R} \mapsto \mathbb{R}_{+}$ convex, non-decreasing 
such that $\varphi(0) = 1$, hence $\varphi \geq \mathbbm{1}_{\mathbb{R}_{+}}$. 
The classification problem consists  in minimizing the expected $\varphi$-risk : $\mathbb{E}[ \varphi(- Y f(X)) ]$.
Typical loss functions are
\begin{itemize}
\item square loss: $\varphi(u) = (1 + u)^2$ for $u \geq 0$ and $\varphi(u)=1$ for $u \leq 0$, leading to regression methods. 
\item perplexity loss: $\varphi(u) =\log(e (1 +u )))$ for $u > -1$ and $\varphi(u)=-\infty$ for $u \leq -1$. 
\item logit loss: $\varphi(u) =\log_2( 1 + e^u )$, leading to logistic regression methods and cross entropy.
\item hinge loss: $\varphi(u) = max(0,1+u)$. This leads to SVM methods.
\item exponential loss: $\varphi(u) = e^u$. 
\end{itemize}

We have also loss function defined directly between $Y$ and $\hat{Y}$ as follows:
\begin{itemize}
\item mean square error: $\mathscr{l}(Y, f(X)) = (Y-f(X))^2$ that leads to standard regression methods. 
\item mean absolute error: $\mathscr{l}(Y, f(X)) = |Y-f(X)|$.
\item cross entropy: $\mathscr{l}(Y, f(X)) = - \frac{1+Y}{2} \log \frac{1+f(X)}{2}$  leading to logistic regression with the usual convention  $0 \, \log 0 \! = \! 0$ justified by $\lim\limits_{x \to 0^+} x \log x = 0$ (trivially obtained by L'Hopital's rule).
\item Huber loss: $\mathscr{l}(Y, f(X)) = \frac 1 2 (Y-f(X))^2$ if $|Y-f(X)| \leq \delta$ and $\mathscr{l}(Y, f(X)) = \delta (  |Y-f(X)| -  \frac 1 2  \delta^2$
\end{itemize}

We see from above that for SL there are numerous loss functions and that cross entropy is one criterium among others. We will see that this cross entropy has a nice interpretation in RL.

\section{Reinforcement Learning Background}\label{sec:RL}
RL is usually modeled by an agent that interacts with an environment $\mathcal{E}$ over a number of discrete time steps.
At each time step $t$, the agent levies a state $s_t$ and picks an action $a_t$ from a set of possible actions $\mathcal{A}$. This choice is done according to its policy $\pi$, where $\pi$ is a mapping from states $s_t$ to actions $a_t$.
Once the action is decided and executed, the agent levies the next state $s_{t+1}$ and a scalar reward $r_t$.
The goes on until the agent reaches a terminal state.
The expected cumulated discounted return $R_t = \sum_{k=0}^{\infty} \gamma^k r_{t+k}$ is the sum of accumulated returns, where at each  time step, future returns are discounted with the discount factor $\gamma \in (0,1]$.
At time $t$, a rational agent seeks to maximize its expected return given his current state $s_t$.

We traditionally define
\begin{itemize}
\item the value of state $s$ under policy $\pi$ is defined as $V^{\pi}(s) = \mathbb{E}\left[R_t|s_t=s\right]$ and is simply the expected return for following policy $\pi$ from state $s$ (\citep{Watkins_1989}).
\item the action value function  under policy $\pi$  $Q^{\pi}(s,a) =$ $\mathbb{E}\left[R_t|s_t=s, a\right]$ is defined as the expected return for selecting action $a$ in state $s$ and following policy $\pi$ (~\cite{Williams_1992}).
\end{itemize}
Both the optimal value function $Q^*(s,a) =$ $\max_{\pi} Q^{\pi}(s,a)$ 
and the optimal value of state $V^{*}(s) =\max_{\pi} V^{\pi}(s)$ satisfy  Bellmann equations. 

Whenever states and action are too large, we are forced to represent the action value function with a function approximator, such as a neural network.
Denoting the parameters $\theta$, the state action function writes $Q(s,a;\theta)$ 

The updates to $\theta$ can be derived from a variety of reinforcement learning algorithms. In particular, in value-based methods, policy-based model-free methods directly parameterize the policy $\pi(a|s;\theta)$ and update the parameters $\theta$ by performing, typically approximate, gradient ascent on $\mathbb{E}[R_t]$.

An illustration of such a method is REINFORCE due to ~\cite{Williams_1992}.
Standard REINFORCE updates the policy parameters $\theta$ in the direction $R_t\nabla_{\theta}$$\log\pi(a_t|s_t;\theta)$, which is an unbiased estimate of $\nabla_{\theta} \mathbb{E}[R_t]$.
It is possible to reduce the variance of this estimate while keeping it unbiased by subtracting a learned function of the state $b_t(s_t)$, known as a baseline~\citep{Williams_1992}, from the return.
The resulting gradient is
$\nabla_{\theta}\log\pi(a_t|s_t;\theta) \left(R_t-b_t(s_t)\right)$.
This approach can be viewed as an actor-critic architecture where the policy $\pi$ is the actor and the baseline $b_t$ is the critic\citep{Sutton_1998,Degris_2012}.

We now prove a new formulation of REINFORCE.
\begin{proposition}\label{prop:prop1}
The gradient descent in REINFORCE can also be computed by minimizing the following quantity
\begin{equation}
\widetilde{J}(\theta) = \lim_{N \to \infty} \frac{1}{N} \sum_{i=1}^N \sum_{t=1}^T R_{i}(\tau)\log{\pi_\theta}(a_{i, t} \mid s_{i,t})  
\end{equation}
For Advantage Actor Critic method, the gradient descent can also be computed by minimizing the following quantity:
\begin{equation}
\widetilde{J}(\theta) = \frac{\sum\limits_{i=1}^N \sum\limits_{t=1}^T A(s_{i,t},a_{i, t})\log{\pi_\theta}(a_{i, t} \mid s_{i,t}) }{N}
\end{equation}
\end{proposition}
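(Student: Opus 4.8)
The plan is to derive both displayed identities from the classical policy gradient theorem by recognizing $\widetilde{J}(\theta)$ as a \emph{surrogate objective}: a function whose gradient, evaluated at the current parameters, coincides with the REINFORCE (resp.\ Advantage Actor Critic) gradient estimate, so that one gradient step on $\widetilde{J}$ reproduces one step of the original algorithm. The bridge to supervised learning is then immediate, since $\sum_t \log\pi_\theta(a_{i,t}\mid s_{i,t})$ has the same algebraic shape as a cross-entropy term with the return $R_i(\tau)$ playing the role of the label weight.

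First I would write the objective maximized by REINFORCE as $J(\theta)=\mathbb{E}_{\tau\sim p_\theta}[R(\tau)]$, where $p_\theta(\tau)$ is the law of the trajectory $\tau=(s_1,a_1,s_2,a_2,\dots)$ under $\pi_\theta$ and the environment dynamics. The log-derivative trick $\nabla_\theta p_\theta(\tau)=p_\theta(\tau)\,\nabla_\theta\log p_\theta(\tau)$ gives $\nabla_\theta J(\theta)=\mathbb{E}_{\tau\sim p_\theta}[R(\tau)\,\nabla_\theta\log p_\theta(\tau)]$, and since $\log p_\theta(\tau)=\log p(s_1)+\sum_t\big(\log\pi_\theta(a_t\mid s_t)+\log p(s_{t+1}\mid s_t,a_t)\big)$ only the policy terms carry $\theta$, so $\nabla_\theta J(\theta)=\mathbb{E}_{\tau\sim p_\theta}\big[R(\tau)\sum_{t=1}^T\nabla_\theta\log\pi_\theta(a_t\mid s_t)\big]$, which is exactly the REINFORCE update direction. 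Replacing the expectation by an empirical average over $N$ sampled trajectories and invoking the law of large numbers, $\nabla_\theta J(\theta)=\lim_{N\to\infty}\frac1N\sum_{i=1}^N\sum_{t=1}^T R_i(\tau)\,\nabla_\theta\log\pi_\theta(a_{i,t}\mid s_{i,t})$. The key observation is that, for \emph{fixed} sampled trajectories, the scalars $R_i(\tau)$ do not depend on $\theta$ (the reward function is a property of the environment), so the gradient commutes with the finite sums and
\begin{equation}
\frac1N\sum_{i=1}^N\sum_{t=1}^T R_i(\tau)\,\nabla_\theta\log\pi_\theta(a_{i,t}\mid s_{i,t})=\nabla_\theta\widetilde{J}(\theta),
\end{equation}
with $\widetilde{J}$ the quantity in the statement; hence a gradient step on $\widetilde{J}$ equals, in the $N\to\infty$ limit, a REINFORCE step (up to the sign convention fixing ascent versus descent).

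For the Advantage Actor Critic case I would repeat the same two steps with $R_i(\tau)$ replaced by $A(s_{i,t},a_{i,t})=Q^\pi(s_{i,t},a_{i,t})-V^\pi(s_{i,t})$, using the baseline lemma: for any function $b$ of the state alone, $\mathbb{E}_{a\sim\pi_\theta(\cdot\mid s)}[\nabla_\theta\log\pi_\theta(a\mid s)\,b(s)]=b(s)\,\nabla_\theta\!\sum_a\pi_\theta(a\mid s)=b(s)\,\nabla_\theta 1=0$, so subtracting $V^\pi(s_t)$ leaves the expectation of the gradient unchanged while reducing variance. The same ``frozen scalar'' argument then identifies $\nabla_\theta\widetilde{J}(\theta)$ with the A2C gradient estimate, this time without passing to the limit since the finite-$N$ sum is the estimator actually used.

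The main obstacle is making the surrogate argument honest: after the log-derivative trick the $\theta$-dependence of the sampling distribution has already been fully accounted for, so when we ``re-differentiate'' $\widetilde{J}$ the return $R_i(\tau)$ — and in the A2C case the advantage estimates — must be treated as constants with respect to the $\theta$ inside $\log\pi_\theta$; differentiating through the samples as well would double-count. Likewise, the baseline lemma requires that the critic $b_t$ depend on $s_t$ but not on $a_t$, together with an exchange of $\nabla_\theta$ and the sum over actions, both valid under the usual finite-action / dominated-convergence assumptions. Finally I would state explicitly the sign convention (REINFORCE performs gradient \emph{ascent} on $\mathbb{E}[R_t]$) so that the word ``minimizing'' in the statement is read consistently.
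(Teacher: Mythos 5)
Your proposal is correct and follows essentially the same route as the paper's own proof: write $J(\theta)=\int_\tau R(\tau)\,\mathbb{P}(\tau\mid\theta)\,d\tau$, apply the log-derivative trick, use the Markov factorization of $\mathbb{P}(\tau\mid\theta)$ so that only the $\log\pi_\theta(a_t\mid s_t)$ terms carry $\theta$, pass to a Monte Carlo sum, and observe that the returns can be frozen as constants so the gradient coincides with $\nabla_\theta\widetilde{J}(\theta)$. The only difference is that you explicitly justify the Advantage Actor Critic case via the baseline lemma $\mathbb{E}_{a\sim\pi_\theta(\cdot\mid s)}[\nabla_\theta\log\pi_\theta(a\mid s)\,b(s)]=0$, a step the paper dismisses with ``the above proof is the same,'' which is a welcome addition rather than a departure.
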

\begin{proof}
See supplementary materials \ref{proof:prop1}
\end{proof}

It is enlightening to see that the two formulations, traditional reinforce and actor critic are very close to SL method with cross entropy. Recall that cross entropy is given by $Y \log(\hat{Y})$ for labels with value in $\{0, 1\}$. This leads to the tables \ref{table1} and \ref{table2}.

\begin{table}[!htbp]
\resizebox{0.47\textwidth}{!}{
\begin{tabular}{|l|l|l|}
\hline
Term 		& SL 	& RL  (REINFORCE)\\ \hline
\multirow{2.5}{1.5cm}{true label}	
& \multirow{2.5}{1cm}{$Y$}	
& \multirow{2.5}{3.5 cm}{expected future rewards: $R(\tau)$}  \\ [20pt] \hline
\multirow{2.5}{1.5cm}{log term}
&  \multirow{2.5}{1cm}{$\log( \hat Y)$}
&  \multirow{2.5}{3.5 cm}{log of policy: $\log{\pi_\theta}(a_{i, t} \mid s_{i,t})$} \\ [20pt]  \hline
\multirow{3}{1cm}{cross entropy} 
& \multirow{3}{1.5 cm}{$\frac{\sum\limits_{i=1}^N Y_{i}\log{\hat{Y}_i } }{N} $}
&  \multirow{2}{4cm}{Monte Carlo expectation: $\frac{\sum\limits_{i=1}^N \sum\limits_{t=1}^T R_{i}(\tau)\log{\pi_\theta}(a_{i, t} \mid s_{i,t}) }{N}$  } \\	 [30pt] \hline
\end{tabular}}
\caption{Comparing SL and RL for REINFORCE}\label{table1}
\end{table}

\begin{table}[!htbp]
\resizebox{0.47\textwidth}{!}{
\begin{tabular}{|l|l|l|}
\hline
Term 		& SL 				& RL  (A2C)\\ \hline
\multirow{2.5}{1.5cm}{true label}	
& \multirow{2.5}{1cm}{$Y$}	
& \multirow{2.5}{4cm}{expected advantage: $A(s,a)=Q(s,a)-V(s)$}  \\ [20pt] \hline

\multirow{2.5}{1.5cm}{log term}
&  \multirow{2.5}{1cm}{$\log( \hat Y)$}
&  \multirow{2.5}{3.5 cm}{log of policy: $\log{\pi_\theta}(a_{i, t} \mid s_{i,t})$} \\ [20pt]  \hline

\multirow{3}{1cm}{cross entropy} 
& \multirow{3}{1.5 cm}{$\frac{\sum\limits_{i=1}^N Y_{i}\log{\hat{Y}_i } }{N} $}
&  \multirow{2}{4.1cm}{Monte Carlo expectation: $\frac{\sum\limits_{i=1}^N \sum\limits_{t=1}^T A(s_{i,t},a_{i, t})\log{\pi_\theta}(a_{i, t} \mid s_{i,t}) }{N}$  }							 \\ [30pt] \hline
\end{tabular}}
\caption{Comparing SL and RL for AAC methods}\label{table2}
\end{table}
Last but not least, recall that there is a connection between cross entropy and Kullback Leibler divergence. Recall that the cross entropy for the distributions $p$ and $q$ over a given set is defined as follows:
\begin{equation}
H(p,q)=\mathbb{E} _{p}[-\log q]
\end{equation}
There is a straightforward connection to the Kullback–Leibler divergence 
$D_{\mathrm {KL} }(p\|q)$ of $q$ from $p$, sometimes 
referred to as the relative entropy of $p$ with respect to $q$ given by
\begin{equation}
H(p,q)=H(p) + D_{\mathrm {KL} }(p\|q)
\end{equation}
where $H(p)$ is the entropy of $p$. As the entropy term $H(p)$ is constant given the true distribution $p$, minimizing the cross entropy or the Kullback Leibler divergence is equivalent. However, for RL, this gives another nice interpretation. As shown previously, PGM can be cast as a cross entropy minimization program. Since the difference between Kullback Leibler divergence and cross entropy is this entropy term, it makes sense to incorporate this entropy term in our gradient descent optimization. In theory, the entropy term should be multiply by one . However in practice as the entropy term is estimated by the empirical entropy, it makes sense somehow to multiply the entropy term by a regularization term $\lambda$, leading to a variation of PGMs where instead of computing a gradient on cross entropy as in REINFORCE, we add an additional term called an entropy regularization. Somehow, this changes the minimization problem to something that is a modified Kullback Leibler divergence. The cross entropy term itself is computed on future expected reward. The analogy holds between SL and RL as long as actions do not influence or slightly influence the environment.

\section{Experiments}\label{sec:experiments} 
We will apply our remark to a very specific environment where actions do not influence the environment.
The considered reinforcement problem is a financial trading game concerning the 
Facebook stock (data were retrieved from \href{https://finance.yahoo.com/quote/FB}{https://finance.yahoo.com/quote/FB}. 
We denote by $(P_t)_{t=1, \ldots }$ the daily closing price of the Facebook stock in sequential order. 
For each day, we compute the daily return as follows $r_t= \frac{P_t}{P_{t-1}}-1$

The environment is composed of the $n$ last daily returns. We intentionally assumes $n$ last returns to emphasize that the choice of taking $n=5$ (last week) or $n=10$ (last two weeks) or $n=20$ (last month) is a model design decision. 
As returns are continuous, our state space is $\mathbb{R}_{+}^n$, which by RL standard is very large. 
Our possible actions are each day threefold: either do nothing, buy or sell the Facebook stock. 
If we decide to enter in a new position at time $t$, this will only be materialized the next day and hence 
we will be initially having an open position only at time $t+1$ initialized at the entering price $p_{t+1}$. 
Hence, if we only keep the position for one period, we will be facing the return $r_{t+2}$ as our position
 will be only closed at time $t+2$.

As for the reward, we take the Sharpe ratio of the trading strategy. As we compute the Sharpe ratio with daily returns, to compute the annual Sharpe ratio, we multiply the daily Sharpe ratio by a scaling factor equal to $\sqrt{250}$, assuming 250 trading days per year. We compute the daily Sharpe ratio as the mean of daily returns over their standard deviations. This implies in particular that we implicitly take a benchmark rate in the Sharpe ratio equal to 0.

To compute our Mark to Market (the value of our trading strategy), we mark any open position to the last know price. 
We parametrize our policy with a deep network consisting of two fully connected layers composed of 16 ReLU nodes. 
We keep the layer size as an hyper parameter and use initially REINFORCE and then A2C method to solve this reinforcement learning problem.

We provide the overall achieved Sharpe ratio with the Sharpe ratio reward choice for various learning rate and annealing rate. 
We obtained an overall Sharpe ratio higher than 1 which is quite a nice achievement compared to traditional investment strategies that typically do not perform better than a Sharpe ratio of 1.

\begin{figure}[!htbp]
\centering	
       \includegraphics[width=8cm, height=6cm]{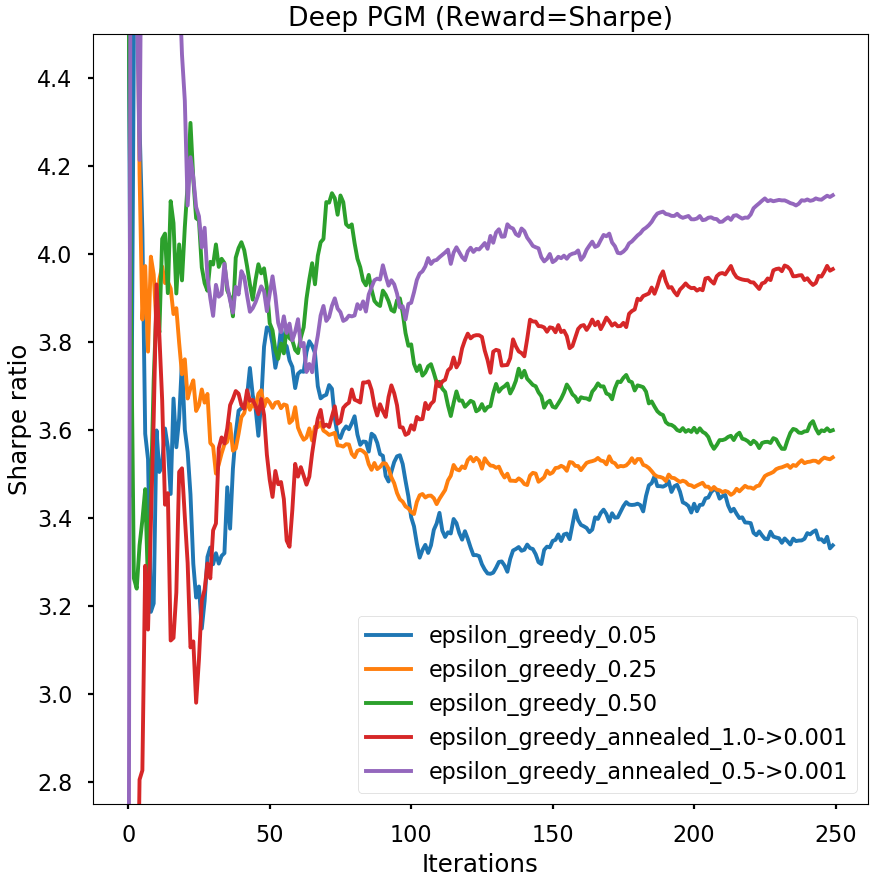}
	\caption{Comparison of various learning rate strategy for our experience. The best strategies are annealing learning rates. These strategies work well as they have the right balance between exploration exploitation. At first, as their learning rate is quite high, they explore more than their fix learning rate counterpart. As we progress in the algorithm, the learning rate decreases and this strategy progressively shift from exploration to exploitation. }
	\label{fig:exp1}
\end{figure}

We can notice in figure \ref{fig:exp1} that the overall Sharpe ratio is above 3 which is very high by financial market standards. This could be explained by the fact that the Facebook stock has been incredibly raising over the last five years. Hence the algorithm has not much difficulty finding the optimal strategy that is to buy and hold the stock
 
\section{Conclusion}\label{sec:conclusion}
We show in this article that there are tight connections between SL and RL. PGM in RL can be cast as cross entropy minimization problems where true labels are replaced by expected future reward or advantage while the log term is changed into the log policy term. This analogy takes its root from the minimization problem where we are looking for the parameters that maximizes the expected futures reward or advantage. Should this optimization objective changed, we conjecture that we could make other analogies between SL and RL

\bibliography{biblio}

\begin{thebibliography}{}

\bibitem[Boucheron et~al., 2005]{Boucheron_2003}
Boucheron, S., Bousquet, O., and Lugosi, G. (2005).
\newblock Theory of classification: A survey of some recent advances.
\newblock {\em ESAIM: Probability and Statistics}, 9:323.

\bibitem[Degris et~al., 2012]{Degris_2012}
Degris, T., Pilarski, P.~M., and Sutton, R.~S. (2012).
\newblock Model-free reinforcement learning with continuous action in practice.
\newblock {\em IEEE In ACC}, 2012:2177--2182.

\bibitem[Hester et~al., 2017]{Hester_2017}
Hester, T., Vecerik, M., Pietquin, O., Lanctot, M., Schaul, T., Piot, B.,
  Sendonaris, A., Dulac{-}Arnold, G., Osband, I., Agapiou, J., Leibo, J.~Z.,
  and Gruslys, A. (2017).
\newblock Learning from demonstrations for real world reinforcement learning.
\newblock {\em CoRR}, abs/1704.03732.

\bibitem[Konda and Tsitsiklis, 2003]{Konda_2003}
Konda, V.~R. and Tsitsiklis, J.~N. (2003).
\newblock On actor-critic algorithms.
\newblock {\em SIAM J. Control Optim.}, 42(4):1143--1166.

\bibitem[Lillicrap et~al., 2015]{Lillicrap_2015}
Lillicrap, T.~P., Hunt, J.~J., Pritzel, A., Heess, N., Erez, T., Tassa, Y.,
  Silver, D., and Wierstra, D. (2015).
\newblock Continuous control with deep reinforcement learning.
\newblock {\em ArXiv e-prints}.

\bibitem[Mnih et~al., 2016]{Mnih_2016}
Mnih, V., Badia, A.~P., Mirza, M., Graves, A., Lillicrap, T., Harley, T.,
  Silver, D., and Kavukcuoglu, K. (2016).
\newblock Asynchronous methods for deep reinforcement learning.
\newblock In {\em ICML}, volume~48, pages 1928--1937, New York, New York, USA.
  PMLR.

\bibitem[Nachum et~al., 2017]{Nachum_2017}
Nachum, O., Norouzi, M., Xu, K., and Schuurmans, D. (2017).
\newblock Bridging the gap between value and policy based reinforcement
  learning.
\newblock In {\em NIPS}, pages 2775--2785. Curran Associates, Inc.

\bibitem[O'Donoghue et~al., 2016]{Donoghue_2016}
O'Donoghue, B., Munos, R., Kavukcuoglu, K., and Mnih, V. (2016).
\newblock {PGQ:} combining policy gradient and q-learning.
\newblock {\em CoRR}, abs/1611.01626.

\bibitem[Peters and Schaal, 2008]{Peters_2008}
Peters, J. and Schaal, S. (2008).
\newblock Natural actor-critic.
\newblock {\em Neurocomputing}, 71(7-9):1180--1190.

\bibitem[Ross et~al., 2011]{Ross_2011}
Ross, S., Gordon, G.~J., and Bagnell, D. (2011).
\newblock A reduction of imitation learning and structured prediction to
  no-regret online learning.
\newblock {\em In: AISTATS, JMLR.org, JMLR Proceedings}, 15:627--635.

\bibitem[Schaal, 1996]{Schaal_1996}
Schaal, S. (1996).
\newblock Learning from demonstration.
\newblock In {\em NIPS, MIT Press}, pages 1040--1046. NIPS, MIT Press.

\bibitem[Schulman et~al., 2017]{Schulman_2017}
Schulman, J., Abbeel, P., and Chen, X. (2017).
\newblock Equivalence between policy gradients and soft q-learning.
\newblock {\em CoRR}, abs/1704.06440.

\bibitem[Silver et~al., 2014]{Silver_2014}
Silver, D., Lever, G., Heess, N., Degris, T., Wierstra, D., and Riedmiller, M.
  (2014).
\newblock Deterministic policy gradient algorithms.
\newblock In {\em ICML}, number~1 in Proceedings of Machine Learning Research,
  pages 387--395, Bejing, China. PMLR.

\bibitem[Sun et~al., 2017]{Sun_2017}
Sun, W., Venkatraman, A., Gordon, G.~J., Boots, B., and Bagnell, J.~A. (2017).
\newblock Deeply {A}ggre{V}a{T}e{D}: Differentiable imitation learning for
  sequential prediction.
\newblock In {\em ICML}, volume~70, pages 3309--3318. PMLR.

\bibitem[Sutton and Barto, 1998]{Sutton_1998}
Sutton, R.~S. and Barto, A.~G. (1998).
\newblock {\em Introduction to Reinforcement Learning}.
\newblock MIT {Press}, {Cambridge}, {MA}, {USA}, 1st edition.

\bibitem[Sutton et~al., 1999]{Sutton_1999}
Sutton, R.~S., McAllester, D., Singh, S., and Mansour, Y. (1999).
\newblock Policy gradient methods for reinforcement learning with function
  approximation.
\newblock In {\em NIPS}, pages 1057--1063, Cambridge, MA, USA. MIT Press.

\bibitem[Watkins, 1989]{Watkins_1989}
Watkins, C. J. C.~H. (1989).
\newblock {\em Learning from delayed rewards}.
\newblock PhD thesis, University of Cambridge England.

\bibitem[Watkins and Dayan, 1992]{Watkins_1992}
Watkins, C. J. C.~H. and Dayan, P. (1992).
\newblock Q-learning.
\newblock {\em Machine Learning}, 8(3):279--292.

\bibitem[Williams, 1992]{Williams_1992}
Williams, R.~J. (1992).
\newblock {\em Simple statistical gradient-following algorithms for
  connectionist reinforcement learning}, volume~8.
\newblock Springer.

\end{thebibliography}

\section{Supplementary materials}
\subsection{Proof of proposition \ref{prop:prop1}\label{proof:prop1} }
We provide here a quick proof of REINFORCE with modern notations. 
Let us denote by  $r(s_t, a_t) $ the reward for a state $s_t$ and action $a_t$. Let us assume we have some time horizon (that may be either finite or infinite).  Let us denote by $\tau = (s_1, a_1, \ldots, s_T, a_T) $ a trajectory generated by our policy approximator governed by a parameter $\theta$. Using the Markov property of our MDP process, the probability of a given trajectory $\mathbb{P}(\tau | \theta)$ can be decomposed into a product of conditional probabilities as follows:
\begin{equation}\label{eq:traj_proba}
\mathbb{P}(\tau | \theta) =  \mathbb{P}(s_1) \prod_{t=1}^T \pi_\theta(a_t \mid s_t) \mathbb{P}(s_{t+1} \mid s_t, a_t)
\end{equation}
Taking the log of the above equation \eqref{eq:traj_proba}, using the fact that the log of a product is the sum of the logs, we get:
\begin{eqnarray}\label{eq:traj_proba2}
\log \mathbb{P}(\tau | \theta)  &=& \log \mathbb{P}(s_1) + \sum_{t=1}^T \log \pi_\theta(a_t \mid s_t) \nonumber \\
&& \qquad + \log \mathbb{P}(s_{t+1} \mid s_t, a_t)
\end{eqnarray}
Differentiating with respect to theta gives (since most of terms do not depend on $\theta$):
\begin{eqnarray}\label{eq:gradient}
\nabla_{\theta} \log \mathbb{P}(\tau | \theta)  &=& \sum_{t=1}^T \nabla_{\theta} \log \pi_\theta(a_t \mid s_t) 
\end{eqnarray}

Recall we want to minimize the expected return, $J(\theta)$, defined as 
\begin{eqnarray}
J(\theta) = \mathbb{E}_{\tau \sim \mathbb{P}(\tau | \theta)} \left[ \sum_{t=1}^T r(s_t, a_t) \right] = \int_{\tau} R(\tau)  \mathbb{P}(\tau | \theta) d\tau
\end{eqnarray}

The above notation $\tau \sim \mathbb{P}(\tau | \theta)$ indicates that we're sampling trajectories $\tau$ from the probability distribution of our policy approximator governed by $\theta$ and $R(\tau)$ is the sum of all the future (discounted) rewards.

To find the optimal $\theta$, we do gradient descent and hence need to compute
\begin{eqnarray}
\nabla_{\theta} J(\theta) &=& \nabla_{\theta} \int_{\tau} R(\tau)  \mathbb{P}(\tau | \theta) d\tau \nonumber 
\end{eqnarray}
Using the fact that we can interchange integral and expectation \eqref{eq:interchange}, assuming smooth functions and Lebesgue dominated convergence to justify that we can bring the gradient under the integral, we get:
\begin{eqnarray}
\nabla_{\theta} J(\theta) &=& \int_{\tau}   R(\tau)  \nabla_{\theta} \mathbb{P}(\tau | \theta) d\tau \label{eq:interchange}
\end{eqnarray}
As the log gradient of a function is the quotient of the gradient and the function, we have:
\begin{eqnarray}
\nabla_{\theta} J(\theta) &=& \int_{\tau}   R(\tau)  \nabla_{\theta} \log \mathbb{P}(\tau | \theta) \mathbb{P}(\tau | \theta)  d\tau \label{eq:log_gradient}
\end{eqnarray}
Expressed the integral as an expectation, we conclude:
\begin{eqnarray}
\nabla_{\theta} J(\theta) &=& \mathbb{E}\left[  R(\tau) \nabla_{\theta} \log \mathbb{P}(\tau | \theta) \right] \label{eq:expectation}
\end{eqnarray}
Finally, using the fact that the gradient of the log probability of the trajectory is the sum of the gradient of the log policy probabilities \eqref{eq:gradient}, we obtain the final expression:
\begin{eqnarray}
\nabla_\theta J(\theta) = \mathbb{E}
\left[  R(\tau)  \sum_{t=1}^T \nabla_\theta \log{\pi_\theta}(a_t \mid s_t)  \right]
\end{eqnarray}

To turn this into something tractable, just express this as a Monte Carlo sum as follows:
\begin{eqnarray}
\nabla_\theta J(\theta) = \lim_{N \to \infty} \frac{1}{N} \sum_{i=1}^N  \sum_{t=1}^T R_{i}(\tau)   \nabla_\theta \log{\pi_\theta}(a_{i, t} \mid s_{i,t})
\end{eqnarray}
As the RHS does only depend on $\theta$ in the term $ \log{\pi_\theta}(a_{i, t} \mid s_{i,t})$, minimizing $ J(\theta) $ is the same as minimizing
$\widetilde{J}(\theta)$ given by:
\begin{eqnarray}
\widetilde{J}(\theta) = \lim_{N \to \infty} \frac{1}{N} \sum_{i=1}^N  \sum_{t=1}^T R_{i}(\tau)  \log{\pi_\theta}(a_{i, t} \mid s_{i,t}) 
\end{eqnarray}

As for Advantage Actor Critic, the above proof is the same and leads to the result.
\qed

\end{document}